\documentclass{article}

\usepackage{iclr2027_conference,times}
\usepackage{amsmath,amssymb,amsthm}
\usepackage{algorithm}
\usepackage{algpseudocode}
\usepackage{microtype}
\usepackage[hidelinks]{hyperref}
\usepackage{url}

\newcommand{\Pcal}{\mathcal P}
\newcommand{\eps}{\varepsilon}
\newcommand{\Gap}{\mathsf G}
\newcommand{\TV}{\operatorname{TV}}

\newtheorem{theorem}{Theorem}[section]
\newtheorem{lemma}[theorem]{Lemma}

\newtheorem{corollary}[theorem]{Corollary}

\title{Hypothesis Testing with Conditional Queries:\\Learnability and the Value of Interaction}

\author{Zonghuan Xu\\
Fudan University\\
\texttt{2430xh10002@m.fudan.edu.cn}}

\iclrfinalcopy
\begin{document}

\maketitle
\lhead{Preprint}

\begin{abstract}
Model evaluations may fix all tests before observing any responses or select
later tests using earlier responses. We study this choice in a
conditional-query model on a finite outcome space $\mathcal X$ with
$|\mathcal X|=N$. We first ask which pairs
of distribution classes can be reliably distinguished. We then ask how many
additional queries are required to match an adaptive tester when all queried
events must be fixed in advance. We show that learnability holds if and only if
the two classes have positive separation in their pairwise conditional
probabilities. When this separation is zero, the optimal worst-case error is
exactly $1/2$ at every finite query budget. For any $T$-query adaptive policy
and any $\rho\in(0,1)$, we construct a randomized non-adaptive procedure using
$O\bigl(N^2(T+\log(1/\rho))\bigr)$ pair queries chosen before any response is
observed. Its simulated transcript is within $\rho$ in total variation of the
adaptive transcript, uniformly over all distributions in the model. We also
construct a matching family with constant adaptive query complexity and
$\Omega_{\eps}(N^2)$ non-adaptive query complexity. Consequently, the
worst-case fixed-error adaptivity gap is $\Theta_{\eps}(N^2)$. Thus interaction
can reduce the required number of tests by a quadratic factor, but the apparent
exponential branching of an interactive evaluation does not yield an
exponential query advantage.
\end{abstract}

\section{Introduction}

Model evaluations take both static and interactive forms. Widely used benchmark suites typically fix their items before a model is evaluated \citep{hendrycks2021,srivastava2023,liang2023}. Dynamic benchmark projects instead create or refresh data using model behavior over successive development cycles \citep{kiela2021,nie2020,potts2021}. Behavioral testing broadens what benchmark items probe, while interactive platforms evaluate open-ended or multi-turn behavior \citep{ribeiro2020,thrush2022,zheng2023}. Live benchmarks update questions over time to limit contamination \citep{white2025}. These approaches change evaluation design at different timescales. We focus on adaptivity within a single evaluation. A static benchmark fixes all tests before observing any response. An interactive evaluation lets earlier responses determine which tests are asked later. A static design avoids sequential rounds, but it may need to prepare tests for many possible response histories. An interactive evaluation can wait to see which history occurs before selecting the next test. Interaction may therefore reduce the number of tests at the cost of sequential evaluation. How large can this reduction be? In particular, can a fixed test suite match any interactive evaluation with a polynomial, rather than exponential, increase in size?

Conditional distribution testing studies algorithms that may sample from a distribution after conditioning on a chosen event \citep{chakraborty2013,canonne2015,canonne2014}. Existing results establish problem-specific upper and lower bounds for identity, equivalence, uniformity, support-size, and tolerant testing \citep{acharya2018,falahatgar2015,narayanan2021}. The area is surveyed by \citet{canonne2020}. Most closely related to our efficiency question, \citet{kamath2019} develop non-adaptive conditional-sampling algorithms, and \citet{chakraborty2024} study bounded adaptivity for equivalence testing.

The timing of experimental choices also appears in active hypothesis testing. This line begins with sequential experiment design and studies how actions chosen from past observations affect testing performance \citep{chernoff1959,naghshvar2013active,nitinawarat2013}. In particular, \citet{naghshvar2013} compare adaptive and non-adaptive action selection for a specified finite collection of hypotheses, actions, and observation laws. Their guarantees depend on the given observation model.

Adaptive question selection has a long history in psychometrics and computerized testing \citep{weiss1982,gershon2005,ghosh2021}. Item-response models have also been used to analyze the difficulty and discriminative value of NLP evaluation examples \citep{lalor2016,rodriguez2021,maiapolo2024}. Recent model-evaluation work uses predicted item difficulty or current model performance to choose questions more efficiently \citep{truong2025,zhuang2025,ding2026}, and active estimators reduce the number of benchmark items needed for score estimation \citep{wu2026}. These works either fix a particular testing objective or a specified hypothesis-and-action model. They do not characterize learnability for arbitrary pairs of distribution classes under conditional queries, or determine the worst-case cost of fixing all queries in advance. We address both questions.

We study this question through conditional distribution testing on a finite outcome space $\mathcal X=[N]$. The $N$ outcomes are the behavior categories distinguished by an evaluation, such as answer choices, tool actions, or judge labels. The unknown object is a full-support distribution $P$ on $\mathcal X$. A binary testing target specifies two hypothesis classes, $\Pcal_0$ and $\Pcal_1$, with the promise that $P\in\Pcal_0\cup\Pcal_1$. The tester must determine which class contains $P$. We compare adaptive and non-adaptive testers under the same conditional-query model. Both may select any nonempty event $A\subseteq\mathcal X$ and receive a sample from $P(\cdot\mid A)$. The only difference is when the queried events are chosen. A non-adaptive tester fixes all events before observing any samples, whereas an adaptive tester may select each event from the observed history.

The model raises two questions. First, which pairs $(\Pcal_0,\Pcal_1)$ are learnable at all, in the sense that their worst-case error can be driven arbitrarily close to zero as the query budget grows? Second, among learnable pairs, how many additional queries are needed when all queried events must be fixed in advance? Since $\mathcal X$ has $2^N-1$ nonempty events, a non-adaptive tester could sample every event sufficiently many times and then replay any finite-round adaptive policy. This elementary argument gives an exponential upper bound, but does not reveal the true cost of removing adaptive rounds. We seek both an exact learnability criterion and the smallest worst-case increase in query count. We call the ratio between the optimal non-adaptive and adaptive query complexities at a common error level the \emph{adaptivity gap}.

Our contributions are summarized below.
\begin{itemize}

\item \textbf{An exact characterization of conditional-query learnability.}
Define the pairwise conditional map $\Theta$ by $\Theta_{ij}(P)=P(i\mid\{i,j\})$. A pair of hypothesis classes is learnable if and only if its two images under $\Theta$ have positive pairwise separation. When this separation is zero, the optimal worst-case error under either adaptive or non-adaptive testing is exactly $1/2$ for every finite query budget.

\item \textbf{A universal non-adaptive simulation bound.}
For any $T$-round adaptive policy and any $\rho\in(0,1)$, we construct a randomized non-adaptive procedure using $O\bigl(N^2(T+\log(1/\rho))\bigr)$ pair queries whose simulated transcript is within $\rho$ in total variation of the adaptive transcript, uniformly over all full-support distributions. The entire pair sequence is chosen before any response is observed, and the bound does not depend on the cardinality or pairwise separation of the hypothesis classes.

\item \textbf{Tightness of the non-adaptive simulation bound.}
We construct a family whose adaptive query complexity is independent of $N$ at fixed error, while every non-adaptive test requires $\Omega_{\eps}(N^2)$ queries. This example shows that the quadratic dependence on $N$ in the preceding simulation bound is unavoidable and gives the exact worst-case rate
\[
\Gap_N(\eps)=\Theta_{\eps}(N^2).
\]

\end{itemize}

Our results replace the exponential enumeration argument with a tight quadratic guarantee. A static conditional benchmark can match any $T$-query adaptive evaluation using $O\bigl(N^2(T+\log(1/\rho))\bigr)$ pair queries chosen in advance, up to transcript error $\rho$, and the lower bound shows that the quadratic dependence on $N$ can be unavoidable. Thus, within this model, interaction can reduce the required number of tests by a quadratic factor, but cannot produce a larger worst-case gain from adaptive selection alone. The apparent exponential branching of an interactive evaluation does not translate into exponential query savings.

\section{Problem Setup}
\label{sec:setup}

We model an evaluated system by an unknown distribution $P$ over a finite set
of observable outcomes. The evaluation must distinguish between two
alternatives, $P\in\Pcal_0$ and $P\in\Pcal_1$, using samples from conditional
distributions selected by the tester. Choosing every conditioning event before
observing any responses gives the non-adaptive protocol, while allowing later
events to depend on earlier observations gives the adaptive protocol. This
section defines both protocols and the quantities used to compare them.

\paragraph{Hypotheses and conditional queries.}
Let $\mathcal X$ be a finite outcome space with $|\mathcal X|=N$, and let
\[
\Delta_+(\mathcal X)
:=
\left\{P\in\Delta(\mathcal X):P(x)>0\text{ for every }x\in\mathcal X\right\}.
\]
The testing problem is specified by two nonempty hypothesis classes
$\Pcal_0,\Pcal_1\subseteq\Delta_+(\mathcal X)$. These classes may be arbitrary
subsets of $\Delta_+(\mathcal X)$ and need not be finite. Under a hidden label
$b\in\{0,1\}$, the unknown distribution $P$ is selected from $\Pcal_b$. The
tester observes $P$ only through conditional queries and must estimate $b$.

The query family is the collection of nonempty events
$\mathcal A:=2^{\mathcal X}\setminus\{\varnothing\}$. A query $A\in\mathcal A$
returns a fresh observation from
\[
P_A(x)
:=
P(x\mid A)
=
\frac{P(x)\mathbf 1\{x\in A\}}{P(A)}.
\]
Full support ensures that every nonempty event is a valid conditional query.

\paragraph{Adaptive and non-adaptive testers.}
An adaptive tester may select each event after observing the responses to its
earlier queries. After $t-1$ queries, the complete observed history is
\[
H_{t-1}:=(A_1,X_1,\ldots,A_{t-1},X_{t-1}).
\]
A randomized adaptive policy selects
\[
A_t\sim\pi_t(\cdot\mid H_{t-1}),
\qquad
X_t\sim P_{A_t},
\]
where $\pi_t(\cdot\mid H_{t-1})$ is a probability distribution on
$\mathcal A$. After $T$ queries, a possibly randomized decision rule outputs
$\widehat b\in\{0,1\}$. A non-adaptive tester instead draws the entire query
vector $A_{1:T}$ before observing any samples. The queries may be randomized,
correlated, and repeated, but their joint distribution cannot depend on the
observations.

\paragraph{Risk, learnability, and the adaptivity gap.}
The minimax risk records the smallest worst-case error achievable with a fixed
query budget. The adaptive minimax risk at horizon $T$ is
\[
R_T^{\mathrm{ad}}(\Pcal_0,\Pcal_1)
:=
\inf_{\text{adaptive testers}}
\max_{b\in\{0,1\}}
\sup_{P\in\Pcal_b}
\Pr_{P}(\widehat b\neq b).
\]
The non-adaptive risk $R_T^{\mathrm{na}}(\Pcal_0,\Pcal_1)$ is defined by
restricting the infimum to non-adaptive testers. Therefore
\[
R_T^{\mathrm{ad}}(\Pcal_0,\Pcal_1)
\le
R_T^{\mathrm{na}}(\Pcal_0,\Pcal_1)
\le \frac12.
\]

Learnability asks whether this risk can be made arbitrarily small as the query
budget grows. We call $(\Pcal_0,\Pcal_1)$ adaptively conditionally learnable if
$R_T^{\mathrm{ad}}(\Pcal_0,\Pcal_1)\to0$ as $T\to\infty$. Non-adaptive
conditional learnability is defined in the same way using
$R_T^{\mathrm{na}}$. For a target error $\eps\in(0,1/2)$, define
\[
T_{\mathrm{ad}}^\star(\eps;\Pcal_0,\Pcal_1)
:=
\inf\left\{T\ge 0:R_T^{\mathrm{ad}}(\Pcal_0,\Pcal_1)\le\eps\right\},
\]
and define $T_{\mathrm{na}}^\star(\eps;\Pcal_0,\Pcal_1)$ analogously. The
infimum of an empty set is $\infty$.

For a class pair with finite adaptive query complexity at error $\eps$, its
fixed-error adaptivity gap is
\[
\operatorname{Gap}_{\eps}(\Pcal_0,\Pcal_1)
:=
\frac{T_{\mathrm{na}}^\star(\eps;\Pcal_0,\Pcal_1)}
{T_{\mathrm{ad}}^\star(\eps;\Pcal_0,\Pcal_1)}.
\]
For a fixed outcome-space size $N$, the worst-case adaptivity gap is
\[
\Gap_N(\eps)
:=
\sup_{\substack{\Pcal_0,\Pcal_1\subseteq\Delta_+(\mathcal X)\\
1\le T_{\mathrm{ad}}^\star(\eps;\Pcal_0,\Pcal_1)<\infty}}
\operatorname{Gap}_{\eps}(\Pcal_0,\Pcal_1).
\]
We count conditional queries and place no restriction on the computation used
to choose the queries or form the final decision.

\section{Exact Characterization of Conditional-Query Learnability}
\label{sec:learnability}

This section gives an exact criterion for conditional-query learnability. The
response law to a query on $\{i,j\}$ depends on $P$ only through the relative
weight $P(i)/P(j)$, and the collection of all such ratios determines $P$. We show that two hypothesis classes are
learnable precisely when their images under this pairwise representation have
a positive uniform separation. Sufficiency follows by estimating every pair in
advance. Necessity follows by showing that pairwise closeness controls every
conditional query and therefore every finite adaptive transcript.

\paragraph{Pairwise representation and main theorem.}
Fix an arbitrary labeling $\mathcal X=\{1,\ldots,N\}$ and let
$M:=\binom N2$. For $1\le i<j\le N$, define
\[
\theta_{ij}(P)
:=
P(i\mid\{i,j\})
=
\frac{P(i)}{P(i)+P(j)},
\qquad
\Theta(P)
:=
\bigl(\theta_{ij}(P)\bigr)_{i<j}\in(0,1)^M.
\]
For $i<j$, these coordinates determine the ratio
\[
\frac{P(i)}{P(j)}
=
\frac{\theta_{ij}(P)}{1-\theta_{ij}(P)}.
\]
The reverse ratio is its reciprocal. Together with the normalization
$\sum_iP(i)=1$, these ratios determine $P$.
The same pairwise normalization underlies classical probabilistic models for
paired comparisons \citep{bradley1952,ford1957,hunter2004}.
For $P,Q\in\Delta_+(\mathcal X)$, write
\[
d_{\mathrm{pair}}(P,Q)
:=
\|\Theta(P)-\Theta(Q)\|_\infty.
\]
The pairwise separation between the two hypothesis classes is
\[
\Delta_{\mathrm{pair}}(\Pcal_0,\Pcal_1)
:=
\inf_{P\in\Pcal_0,\,Q\in\Pcal_1}
d_{\mathrm{pair}}(P,Q).
\]

\begin{theorem}[Exact learnability criterion]
\label{thm:learnability}
Suppose $N\ge2$. The following statements are equivalent.
\begin{enumerate}
\item $\Delta_{\mathrm{pair}}(\Pcal_0,\Pcal_1)>0$.
\item $R_T^{\mathrm{na}}(\Pcal_0,\Pcal_1)\to0$ as $T\to\infty$.
\item $R_T^{\mathrm{ad}}(\Pcal_0,\Pcal_1)\to0$ as $T\to\infty$.
\end{enumerate}
If $\Delta_{\mathrm{pair}}(\Pcal_0,\Pcal_1)=0$, then for every finite $T$,
\[
R_T^{\mathrm{ad}}(\Pcal_0,\Pcal_1)
=
R_T^{\mathrm{na}}(\Pcal_0,\Pcal_1)
=
\frac12.
\]
If $\Delta:=\Delta_{\mathrm{pair}}(\Pcal_0,\Pcal_1)>0$, then for every
$\eps\in(0,1/2)$,
\[
T_{\mathrm{na}}^\star(\eps;\Pcal_0,\Pcal_1)
\le
M\left\lceil
\frac{8}{\Delta^2}\log\frac{2M}{\eps}
\right\rceil.
\]
\end{theorem}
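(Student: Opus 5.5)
The proof has two halves: an explicit non-adaptive upper bound when $\Delta>0$, and an indistinguishability argument when $\Delta=0$. Since $R^{\mathrm{ad}}_T\le R^{\mathrm{na}}_T$, the upper bound gives (1)$\Rightarrow$(2)$\Rightarrow$(3). The lower bound gives (3)$\Rightarrow$(1) and the exact value $1/2$.

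\emph{Sufficiency and the query bound.} Assume $\Delta>0$ and set $m:=\lceil (8/\Delta^2)\log(2M/\eps)\rceil$. The tester queries each pair $\{i,j\}$, $i<j$, exactly $m$ times, fixed in advance, for $Mm$ queries in total. Let $\hat\theta_{ij}$ be the empirical frequency of outcome $i$, and set $\hat\Theta:=(\hat\theta_{ij})_{i<j}$. By Hoeffding, $\Pr(|\hat\theta_{ij}-\theta_{ij}(P)|\ge\Delta/2)\le 2e^{-m\Delta^2/2}$. A union bound over $M$ pairs then makes $\|\hat\Theta-\Theta(P)\|_\infty<\Delta/2$ with probability at least $1-2Me^{-m\Delta^2/2}$. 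The choice of $m$ makes this failure probability at most $\eps$; it even leaves slack, which covers boundary issues such as using $\ge$ versus $>$.

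The decision rule is: output $0$ if $\inf_{P'\in\Pcal_0}\|\hat\Theta-\Theta(P')\|_\infty<\Delta/2$, and $1$ otherwise. If $P\in\Pcal_0$ and the good event holds, we output $0$. If $P\in\Pcal_1$ and the good event holds, then outputting $0$ would require some $P'\in\Pcal_0$ within $\Delta/2$ of $\hat\Theta$. The triangle inequality would then give $d_{\mathrm{pair}}(P',P)<\Delta$, contradicting the definition of $\Delta$. So the worst-case error is at most $\eps$, which is the stated bound on $T^\star_{\mathrm{na}}$. Letting $\eps\to0$ shows $R_T^{\mathrm{na}}\to0$.

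\emph{Necessity and the exact value $1/2$.} Assume $\Delta=0$ and fix $T$ and any adaptive tester. Take $P_n\in\Pcal_0$ and $Q_n\in\Pcal_1$ with $d_{\mathrm{pair}}(P_n,Q_n)\to0$. The key lemma is that for every event $A$, $\TV(P_{n,A},Q_{n,A})\le c_N(d_{\mathrm{pair}}(P_n,Q_n))$, where $c_N(\delta)\to0$ as $\delta\to0$ uniformly over full-support distributions. Once this lemma is in hand, the transcript laws under $P_n$ and $Q_n$ differ in total variation by at most $T\,c_N(\delta_n)$. This follows from the chain rule: couple step by step, with the policy kernel shared. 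The error then satisfies $\max_b\sup\Pr(\hat b\ne b)\ge \tfrac12(1-T c_N(\delta_n))\to\tfrac12$. Combined with the trivial upper bound $\tfrac12$ (a random guess), this gives $R^{\mathrm{ad}}_T=R^{\mathrm{na}}_T=1/2$, so neither risk tends to $0$.

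\emph{Main obstacle: the uniform lemma.} The difficulty is that $P_n$ may have tiny masses, so the ratios $P(i)/P(j)$ can degenerate, and closeness in $\theta$ does not give closeness of ratios uniformly. My plan is to work inside $A$ with its maximal element $k=\arg\max_{x\in A}P(x)$. Then $\theta_{xk}(P)\le 1/2$ for every $x\in A$ (using the convention $\theta_{xk}:=1-\theta_{kx}$ when $x>k$), and
\[
P_A(x)=\frac{r_x}{\sum_{y\in A}r_y},\qquad r_y:=\frac{\theta_{yk}}{1-\theta_{yk}}\in(0,1],\quad r_k=1.
\]
On $\theta\in(0,1/2+\delta]$ the map $\theta\mapsto\theta/(1-\theta)$ is Lipschitz with constant at most $4$ for small $\delta$. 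Hence $|r_y(P)-r_y(Q)|\le 4\delta$ for every $y\in A$, where the $r_y(Q)$ are taken with respect to the same anchor $k$. The denominators are at least $1$, since $r_k=1$. It follows that $\sum_x|P_A(x)-Q_A(x)|=O(N^2\delta)$, which is the required $c_N(\delta)$. The anchor $k$ must be chosen as the maximizer for $P$; this choice keeps $Q$'s ratios bounded by about $1+O(\delta)$, which is what makes the estimate uniform.
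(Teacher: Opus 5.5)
Your proof is correct and has the same architecture as the paper's: a precommitted pairwise-estimation test for sufficiency, and for necessity a per-event stability bound followed by a round-by-round coupling of adaptive transcripts. The genuine difference is in how you prove the per-event bound.

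The paper writes $\TV(p,q)=\sum_{i\in I_+}\sum_{j\in I_-}(p_iq_j-q_ip_j)$ across the cut $I_+=\{i:p_i\ge q_i\}$. It bounds each determinant by $d\,(p_i+p_j)(q_i+q_j)$ and sums, obtaining $\TV(P_A,Q_A)\le(|A|-1)\,d_{\mathrm{pair}}(P,Q)$. This holds for every value of $d$, with no smallness assumption and no reference atom.

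You instead anchor at the $P$-heaviest atom of $A$, so that all odds lie in $(0,1]$, and rebuild $P_A$ and $Q_A$ from these odds. This makes it transparent why tiny masses cause no degeneracy, but it only works for small $\delta$ and gives a worse constant. Two small corrections are needed:
\begin{enumerate}
\item On $(0,1/2+\delta]$ the Lipschitz constant of $\theta\mapsto\theta/(1-\theta)$ is $4/(1-2\delta)^2$, which exceeds $4$ for every $\delta>0$. You can take $16$ for $\delta\le1/4$ and set $c_N(\delta)=1$ otherwise.
\item Your normalization step actually gives a sharper bound than you state. Since $\sum_x|r_x/S-r'_x/S'|\le 2\sum_x|r_x-r'_x|/S$ with $S\ge1$, you get $O(|A|\delta)$ rather than $O(N^2\delta)$.
\end{enumerate}
Only $c_N(\delta)\to0$ is needed for the zero-separation claim, so either lemma suffices. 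The paper's clean global linear constant is what lets it state the explicit transcript bound $T(N-1)\,d_{\mathrm{pair}}(P,Q)$.

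On the sufficiency side, your one-sided threshold rule with accuracy radius $\Delta/2$ is also valid. It even leaves slack relative to the stated budget, which is calibrated for the paper's $\Delta/4$ accuracy and argmin rule.
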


\paragraph{Sufficiency: positive separation gives a non-adaptive test.}
Suppose $\Delta:=\Delta_{\mathrm{pair}}(\Pcal_0,\Pcal_1)>0$. Query every
pair the same number of times and let $\widehat\Theta$ collect the empirical
frequencies. For $b\in\{0,1\}$, define
\[
D_b
:=
\inf_{Q\in\Pcal_b}
\|\widehat\Theta-\Theta(Q)\|_\infty,
\]
and output the label with smaller $D_b$. If the true distribution
$P\in\Pcal_b$ satisfies
$\|\widehat\Theta-\Theta(P)\|_\infty<\Delta/4$, then
$D_b<\Delta/4$. The triangle inequality gives
$D_{1-b}\ge\Delta-\Delta/4=3\Delta/4$. Hoeffding's inequality and a
union bound over the $M$ pairs give this event with probability at least
$1-\eps$ using the number of queries stated in the theorem. Thus positive
separation already yields a non-adaptive test.

\paragraph{Necessity: pairwise closeness controls every query.}
Fix $P,Q\in\Delta_+(\mathcal X)$ and let
$d:=d_{\mathrm{pair}}(P,Q)$. Pairwise closeness first bounds the response laws
under every conditional query. Applying this bound at each round then controls
the law of the entire adaptive transcript:
\[
\begin{aligned}
d_{\mathrm{pair}}(P,Q)=d
&\quad\Longrightarrow\quad
\max_{\varnothing\ne A\subseteq\mathcal X}\TV(P_A,Q_A)
\le (N-1)d,\\
&\quad\Longrightarrow\quad
\TV\bigl(\mathsf L_P(H_T),\mathsf L_Q(H_T)\bigr)
\le T(N-1)d.
\end{aligned}
\]

\noindent\emph{From pairs to arbitrary events.}
Fix a nonempty event $A$ and write $p=P_A$ and $q=Q_A$. For distinct
$i,j\in A$, conditioning once more on $\{i,j\}$ gives
\[
\frac{|p_iq_j-q_ip_j|}{(p_i+p_j)(q_i+q_j)}
=
\left|
\frac{p_i}{p_i+p_j}-\frac{q_i}{q_i+q_j}
\right|
\le d.
\]
Let $I_+:=\{i:p_i\ge q_i\}$ and $I_-:=A\setminus I_+$. Total variation
can be written as the mass transported across this cut:
\begin{align*}
\TV(p,q)
&=
\sum_{i\in I_+}\sum_{j\in I_-}(p_iq_j-q_ip_j)\\
&\le
d\sum_{\{i,j\}\subseteq A}(p_i+p_j)(q_i+q_j)\\
&=
d\left(1+(|A|-2)\sum_{i\in A}p_iq_i\right)
\le (|A|-1)d.
\end{align*}
Consequently,
\[
\max_{\varnothing\ne A\subseteq\mathcal X}
\TV(P_A,Q_A)
\le
(N-1)d_{\mathrm{pair}}(P,Q).
\tag{1}
\label{eq:event-stability}
\]

\medskip
\noindent\emph{From events to adaptive transcripts.}
Couple two executions of the same tester under $P$ and $Q$. As long as their
histories agree, use the same internal randomness, so the tester selects the
same next event in both executions. Couple the two conditional observations
optimally. Equation~\eqref{eq:event-stability} bounds the probability that
they first disagree at any round by $(N-1)d_{\mathrm{pair}}(P,Q)$. A union
bound over $T$ rounds therefore gives
\[
\TV\bigl(\mathsf L_P(H_T),\mathsf L_Q(H_T)\bigr)
\le
T(N-1)d_{\mathrm{pair}}(P,Q).
\]

\paragraph{Zero separation forces risk $1/2$.}
Suppose $\Delta_{\mathrm{pair}}(\Pcal_0,\Pcal_1)=0$ and fix any finite
$T$-query adaptive tester. The conclusion is immediate for $T=0$. For
$T\ge1$ and every $\eta>0$, there exist
$P\in\Pcal_0$ and $Q\in\Pcal_1$ with
$d_{\mathrm{pair}}(P,Q)<\eta/[T(N-1)]$. Their transcript distributions
are then less than $\eta$ apart in total variation. For any decision rule,
its error probabilities under $P$ and $Q$ sum to at least $1-\eta$, so its
worst-case error is at least $(1-\eta)/2$. Letting $\eta$ tend to zero and
comparing with a fair random guess gives the exact minimax risk $1/2$.
Appendix~\ref{app:learnability-proof} supplies the concentration details and
the full formal proof.

Thus pairwise separation exactly characterizes conditional-query learnability.
We now compare the query counts of adaptive and non-adaptive tests on the
learnable problems.

\section{Non-Adaptive Simulation of Adaptive Queries}
\label{sec:upper-bound}

The proof uses the coupling-from-the-past (CFTP) idea of \citet{propp1996}, in
the read-once form of \citet{wilson2000}. Fix one round after the adaptive policy
has selected an event $A$. We have only pair samples collected before $A$ was
known, and we need one draw from $P_A$. Estimating the pair profile and
reconstructing $P$ is possible but unnecessary: each pair sample instead gives
a random map on $A$. One such map leaves the law $P_A$ unchanged, and once a
composition of maps sends every starting state to the same output, that output
itself has law $P_A$. We first formalize this one-round construction, then bound
its cost and apply it successively to all $T$ adaptive rounds.

\begin{theorem}[Universal transcript simulation]
\label{thm:upper-bound}
Let $N\ge2$, let $T\ge1$, and let $\pi$ be any randomized adaptive policy
using $T$ conditional queries. Write $M=\binom N2$. For every
$\rho\in(0,1)$, there is a randomized non-adaptive procedure that makes at
most
\[
L
:=
\left\lceil
8M\left(T+\log\frac1\rho\right)
\right\rceil
\tag{2}
\label{eq:simulation-budget}
\]
pair queries and produces a $T$-round simulated transcript $\widetilde H_T$ such
that, for every $P\in\Delta_+(\mathcal X)$,
\[
\TV\bigl(\mathsf L_P(\widetilde H_T),\mathsf L_P(H_T^\pi)\bigr)
\le
\rho.
\]
Consequently, any decision rule with error at most $\eps$ under $\pi$ has
error at most $\eps+\rho$ when applied to the simulated transcript.
\end{theorem}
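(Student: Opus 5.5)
The plan is a read-once coupling-from-the-past construction driven by a single stream of i.i.d. uniformly random pair queries. Before any response is observed, draw $L$ pairs $\{i_\ell,j_\ell\}$ independently and uniformly from the $M$ pairs, and query each once, obtaining $Y_\ell\sim P_{\{i_\ell,j_\ell\}}$. This is plainly non-adaptive.

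Each sample $(\{i,j\},Y)$ together with an event $A$ defines a random map $f:A\to A$. If $\{i,j\}\subseteq A$ and $x\in\{i,j\}$, set $f(x)=Y$; otherwise set $f(x)=x$. Since $P_A(\cdot\mid\{i,j\})=P_{\{i,j\}}$ for $\{i,j\}\subseteq A$, the map $f$ is a heat-bath update on the pair. Averaged over the uniform pair it is a Markov kernel on $A$ that leaves $P_A$ invariant. I will check this by a detailed-balance computation.

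The first key step is the one-round simulation. Given the event $A_t$ chosen by $\pi$ from the simulated history, take the next unused samples $f_1,f_2,\dots$ and find the least $m$ such that $f_1\circ f_2\circ\cdots\circ f_m$ is constant on $A_t$. Here $f_1$ is treated as the most recent update, in Propp--Wilson order. Output that constant as $\widetilde X_t$. The standard CFTP argument shows that, conditionally on the past, $\widetilde X_t$ has law exactly $P_{A_t}$ whenever $m$ is finite. The argument extends the i.i.d. sequence infinitely into the past, uses stationarity of $P_A$, and uses the fact that the composition over a longer window stays constant.

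The rounds are chained on consecutive disjoint segments of the stream. Each round's $m$ is a stopping time for the stream, so the samples after it are i.i.d. and independent of everything produced so far. If the stream never runs out, the simulated transcript therefore has exactly the law of $H_T^\pi$. We declare failure if the stream is exhausted, so $\TV\le\Pr(\text{exhaustion})$.

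The second step bounds the coalescence time, which I expect to be the main technical point. The image of $f_1\circ\cdots\circ f_k$ equals $f_1\circ\cdots\circ f_{k-1}$ applied to the image of $f_k$. The cleanest route is to note that the image size of a composition of $k$ i.i.d. maps has the same law whichever end we compose from. Tracking the forward image set $S$ is then easy: a map with both endpoints in $S$ merges them, so $|S|$ drops by one, and other maps never increase $|S|$. While $|S|=k$, this happens with probability at least $\binom k2/M$ per step. Hence $m$ is stochastically dominated by $G=\sum_{k=2}^{N}G_k$ with independent $G_k\sim\mathrm{Geom}\bigl(\binom k2/M\bigr)$. Its mean is at most $M\sum_{k\ge2}1/\binom k2=2M$. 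If the equidistribution claim proves awkward, I will argue directly on the backward composition, since its image is also monotone under the same merging mechanism.

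The final step is the tail bound. The total consumption over $T$ rounds is dominated by a sum of $T(N-1)$ independent geometric variables with parameters $p_k=\binom k2/M$. For $\lambda=1/(4M)$ and each $k$, I will bound $\mathbb E e^{\lambda G_k}$ by $\exp\bigl(c/\binom k2\bigr)$. The per-round product is then at most $e^{c'}$ with $c'\le 1$, using $\sum_k 1/\binom k2=2$. Markov's inequality then gives
\[
\Pr\Bigl(\textstyle\sum G>8M(T+\log(1/\rho))\Bigr)\le e^{Tc'-2T-2\log(1/\rho)}\le\rho,
\]
which yields the budget~\eqref{eq:simulation-budget}. The delicate point is the $k=2$ term, where $p_2=1/M$ and $\lambda/p_2=1/4$; the constant $8$ should leave enough room there. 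The decision-rule consequence follows because total variation between transcripts bounds the change in any event probability.
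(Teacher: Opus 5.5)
Your proposal is correct and follows essentially the same route as the paper. It uses the same precommitted uniform pair stream and the same merge-at-$Y$ maps composed in Propp--Wilson order, chained across rounds by stopping times, together with the same reversal argument that writes the coalescence time as a sum of independent $\operatorname{Geom}(\binom k2/M)$ waiting times, closed by an exponential-moment Markov bound. The differences are minor: you take $\lambda=1/(4M)$ and dominate each round by $\sum_{k=2}^N G_k$, which works since even $c'\le 2$ would suffice in your final inequality, whereas the paper uses $s=1/(8M)$ to get $\mathbb E e^{s\tau_A}\le e$. Also, on stream exhaustion you should output a fixed valid completion, as the paper does, rather than a failure symbol, so that $\widetilde H_T$ is always a $T$-round transcript.
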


\paragraph{Construction.}
The procedure first collects a fixed random sequence of pair samples. After a
later event $A$ is selected, a stored sample $(e,y)$ defines
$\Phi_{e,y}^{A}:A\to A$ by
\[
\Phi_{e,y}^{A}(x)
:=
\begin{cases}
y, & e\subseteq A\text{ and }x\in e,\\
x, & \text{otherwise},
\end{cases}
\qquad x\in A.
\]
It merges the endpoints of $e$ at $y$ when $e\subseteq A$ and is otherwise the
identity. Algorithm~\ref{alg:simulation} composes unused maps until every
possible starting state has the same output. Samples are never reused; if the
stream is exhausted, a fixed rule completes a valid transcript without querying
$P$.

\begin{algorithm}[H]
\caption{Simulating a $T$-round adaptive policy from pair samples chosen in advance}
\label{alg:simulation}
\begin{algorithmic}[1]
\Require policy $\pi$, stream length $L$
\Statex \textbf{Precommitted data collection}
\State Draw $E_{1:L}$ i.i.d.\ uniformly from $\binom{\mathcal X}{2}$, submit all
queries, and store $Y_s\sim P_{E_s}$.
\Statex \textbf{Offline reconstruction}
\State Set $s\gets1$ and $\widetilde H_0\gets\varnothing$.
\For{$t=1,\ldots,T$}
    \State Draw $A_t\sim\pi_t(\cdot\mid\widetilde H_{t-1})$.
    \State Set $G\gets\mathrm{id}_{A_t}$.
    \While{$G$ is not constant on $A_t$}
        \If{$s>L$}
            \State \Return a valid $T$-round completion under $\pi$.
        \EndIf
        \State $G\gets G\circ\Phi_{E_s,Y_s}^{A_t}$ and $s\gets s+1$.
    \EndWhile
    \State Let $\widetilde X_t$ be the common value of $G$ on $A_t$.
    \State Set
    $\widetilde H_t\gets(\widetilde H_{t-1},A_t,\widetilde X_t)$.
\EndFor
\State \Return $\widetilde H_T$
\end{algorithmic}
\end{algorithm}

\paragraph{Step 1: one pair sample preserves the target law.}
Fix $A$ and one unused pair sample $(e,Y)$. If $e\not\subseteq A$, then
$\Phi_{e,Y}^{A}$ is the identity. If $e\subseteq A$, let $X\sim P_A$ be
independent of $Y\sim P_e$. For $z\in e$,
\[
\Pr\bigl(\Phi_{e,Y}^{A}(X)=z\bigr)
=
P_A(e)P_e(z)
=
\frac{P(e)}{P(A)}\frac{P(z)}{P(e)}
=
P_A(z).
\]
For $z\in A\setminus e$, the mass is unchanged. Thus every unused map preserves
$P_A$. Equivalently, conditional on $X\in e$, the law of $X$ is exactly the
law $P_e$ used to redraw the pair. This uses the nested-conditioning relation
$P_A(\cdot\mid e)=P_e$ for $e\subseteq A$; it need not hold for a generic
prompt-response kernel.

\paragraph{Step 2: coalescence produces an exact sample.}
Write $\Phi_1^A,\Phi_2^A,\ldots$ for the successive maps used for $A$ and set
\[
C_n^A
:=
\Phi_1^A\circ\Phi_2^A\circ\cdots\circ\Phi_n^A,
\qquad
\tau_A
:=
\inf\{n:C_n^A\text{ is constant on }A\}.
\]
Interpret $\Phi_1^A$ as the update from time $-1$ to the present, $\Phi_2^A$ as
the update from time $-2$ to time $-1$, and so on. Thus $C_n^A$ maps a state at
time $-n$ to the fixed present. For analysis, start with an independent
$Z_n\sim P_A$ at time $-n$. Repeated application of Step~1 gives
$C_n^A(Z_n)\sim P_A$.

If $C_n^A$ is constant, its output no longer depends on the unavailable initial
state $Z_n$. The order of composition ensures more: if $C_n^A\equiv w$, then
$C_n^A\circ\Phi_{n+1}^A\equiv w$. We are extending the starting time farther
into the past while keeping the present fixed, so an already determined present
value cannot change. Let $W_A$ be this common value when coalescence first
occurs. On $\{\tau_A\le n\}$, $C_n^A(Z_n)=W_A$. Step~3 shows that
$\tau_A<\infty$ almost surely, and therefore
$\TV(\mathsf L(W_A),P_A)\le\Pr(\tau_A>n)\to0$. Hence $W_A\sim P_A$.

\paragraph{Step 3: how many pair samples are needed.}
The right-composition order above is convenient for exactness but awkward to
count. For each fixed $n$, the maps are i.i.d., so reversing them does not
change their joint law. We may therefore run the same maps forward and track
the set of possible states. If $k$ states remain, the next map reduces this
number to $k-1$ exactly when its uniform pair has both endpoints in that set,
an event of probability
\[
p_k=\frac{\binom k2}{M}.
\]
It follows that, for $|A|\ge2$,
\[
\tau_A
\overset{d}{=}
\sum_{k=2}^{|A|}G_k,
\qquad
G_k\sim\operatorname{Geom}\left(\frac{\binom k2}{M}\right)
\text{ independently}.
\tag{3}
\label{eq:coalescence-law}
\]
Consequently,
\[
\mathbb E\tau_A<2M,
\qquad
\mathbb E\exp\left(\frac{\tau_A}{8M}\right)\le e.
\]
The final transition, from two possible states to one, already has expected
waiting time $M=\binom N2$. This explains the quadratic scale. Appendix
~\ref{app:upper-bound-proofs} gives the geometric-waiting-time and
exponential-moment calculations.

\paragraph{Step 4: from one response to the adaptive transcript.}
Let $H_t^\infty$ be the reconstruction from an infinite pair-sample stream and
let $S_t$ be the total number of samples consumed through round $t$. After
round $t-1$, the unused suffix is still i.i.d.\ and independent of the simulated
history because $S_{t-1}$ is a stopping time. Conditional on the next selected
event $A_t$, Steps~1--2 therefore give
\[
X_t^\infty\mid H_{t-1}^\infty,A_t\sim P_{A_t}.
\]
Induction gives $H_T^\infty\overset{d}=H_T^\pi$. The conditional
exponential-moment calculation in Appendix~\ref{app:upper-bound-proofs} also
gives
\[
\Pr\left[
S_T>8M\left(T+\log\frac1\rho\right)
\right]
\le\rho.
\]
Couple the finite and infinite reconstructions using the same policy randomness
and first $L$ pair samples. They agree on $\{S_T\le L\}$, so with the choice of
$L$ in equation~\eqref{eq:simulation-budget},
\[
\TV\bigl(\mathsf L_P(\widetilde H_T),\mathsf L_P(H_T^\pi)\bigr)
\le\Pr(S_T>L)\le\rho.
\]
This proves Theorem~\ref{thm:upper-bound}.

\begin{corollary}[Quadratic upper bound on the adaptivity gap]
\label{cor:quadratic-upper}
For every fixed $\eps\in(0,1/2)$,
$\Gap_N(\eps)=O_{\eps}(N^2)$.
\end{corollary}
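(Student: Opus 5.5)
Fix $\eps\in(0,1/2)$ and a class pair $(\Pcal_0,\Pcal_1)$ with $T:=T_{\mathrm{ad}}^\star(\eps;\Pcal_0,\Pcal_1)$ satisfying $1\le T<\infty$. The plan is to turn a near-optimal adaptive tester at a slightly smaller error level into a non-adaptive tester at error $\eps$ via Theorem~\ref{thm:upper-bound}. We then bound the resulting ratio by a quantity depending only on $\eps$, times $M=\binom N2\le N^2/2$.

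\emph{Step 1: slack in the error.} Running Theorem~\ref{thm:upper-bound} on a tester with error at most $\eps$ only yields error $\eps+\rho$, which overshoots the target. So I need an adaptive tester of error at most $\eps/2$. I would obtain it by boosting. Take an adaptive tester with worst-case error $\le\eps$ at budget $T$; the infimum is attained up to arbitrarily small slack, and since $R_T^{\mathrm{ad}}\le\eps$ I can take error $\le\eps'$ for some $\eps'<1/2$ depending only on $\eps$, say $\eps'=(\eps+1/2)/2$. Run $k$ independent copies sequentially; this is still adaptive, with $kT$ queries. Output the majority vote. Hoeffding's inequality gives error at most $\exp(-2k(1/2-\eps')^2)$, so $k=k(\eps)=O\bigl((1/2-\eps)^{-2}\log(1/\eps)\bigr)$ suffices for error $\le\eps/2$. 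The boosted tester uses $T':=k(\eps)T$ queries.

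\emph{Step 2: simulate.} Apply Theorem~\ref{thm:upper-bound} to the boosted policy with horizon $T'$ and $\rho=\eps/2$, and reuse its decision rule. The transcript TV bound holds uniformly over $P$, so for every $P\in\Pcal_b$ the error is at most $\eps/2+\eps/2=\eps$. The procedure is non-adaptive: all pair queries are drawn before any observation, and pair queries are valid events in $\mathcal A$. It uses at most $\lceil 8M(k(\eps)T+\log(2/\eps))\rceil$ queries. Since $T\ge1$, this is at most $C_\eps M T$ with $C_\eps:=8k(\eps)+8\log(2/\eps)+1$. Hence $T_{\mathrm{na}}^\star\le C_\eps MT$, giving $\operatorname{Gap}_\eps(\Pcal_0,\Pcal_1)\le C_\eps M\le C_\eps N^2/2$. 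Taking the supremum over admissible pairs yields $\Gap_N(\eps)=O_\eps(N^2)$.

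\emph{Main obstacle.} The only delicate point is Step 1, namely that the error level is preserved exactly at $\eps$ rather than degraded to $\eps+\rho$. Two details need care here. First, the infimum in $R_T^{\mathrm{ad}}$ might not be attained, which is why I boost from the slack level $\eps'$ rather than from $\eps$ itself. Second, majority-vote amplification must be valid for composite hypotheses. It is, because each copy's error bound holds for every fixed $P\in\Pcal_b$ and the copies are independent given $P$. The boosting factor depends only on $\eps$, so it is absorbed into the $O_\eps$.
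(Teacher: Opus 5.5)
Your proposal is correct and follows the paper's proof essentially step for step. You amplify a near-optimal adaptive tester by majority vote down to error $\eps/2$, handling a possibly unattained minimax infimum through a slack level in $(\eps,1/2)$. You then apply Theorem~\ref{thm:upper-bound} with $\rho=\eps/2$ and use $T\ge1$ to absorb the additive $\log(2/\eps)$ term into a constant depending only on $\eps$, times $M=\binom N2$. The only difference is cosmetic: you always amplify from the slack level $\eps'$, whereas the paper does so only when the infimum is not attained.
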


Repeat an $\eps$-error adaptive tester $O_\eps(1)$ times so that its majority
vote has error at most $\eps/2$, then apply Theorem~\ref{thm:upper-bound} with
$\rho=\eps/2$. This proves the corollary. The exact finite-error bound and the
case of an unattained minimax infimum appear in
Appendix~\ref{app:upper-bound-proofs}. The next section shows that the
quadratic factor cannot be improved.

\section{A Matching Quadratic Lower Bound}
\label{sec:lower-bound}

Corollary~\ref{cor:quadratic-upper} shows that an $O(N^2)$ increase in query
count is always sufficient to remove adaptive rounds. We now show that this
factor is sometimes necessary. The unknown distribution will be indexed by a
class label $b\in\{0,1\}$ and two additional indices $(j,k)\in[m]^2$. The
tester must output $b$; it need not output $j$ or $k$. Queries to two fixed
sets determine $j$ and $k$ but have the same law under both labels. Once those
indices are known, one particular pair distinguishes the labels.

\paragraph{Construction of the hard family.}
Fix $m\ge2$, let $N=4m$, and partition the outcome space into
\[
I=\{a_1,\ldots,a_m\},
\quad
J=\{b_1,\ldots,b_m\},
\quad
U=\{u_1,\ldots,u_m\},
\quad
V=\{v_1,\ldots,v_m\}.
\]
For every $b\in\{0,1\}$ and $(j,k)\in[m]^2$, we define a distribution
$P_{b,j,k}$. Write $B_{jk}:=\{u_j,v_k\}$. The construction is chosen to
produce the following three conditional laws.

\begin{center}
\small
\begin{tabular}{ccl}
\hline
Query & Conditional behavior & What it determines \\
\hline
$I$ & $P_{b,j,k}(a_j\mid I)=3/4$ for both labels & index $j$ \\
$J$ & $P_{b,j,k}(b_k\mid J)=3/4$ for both labels & index $k$ \\
$B_{jk}$ & $P_{0,j,k}(u_j\mid B_{jk})=3/4$,
$P_{1,j,k}(u_j\mid B_{jk})=1/4$ & label $b$ \\
\hline
\end{tabular}
\end{center}

The following masses realize these laws. On $I$, set
\[
P_{b,j,k}(a_j)=\frac{3}{16},
\qquad
P_{b,j,k}(a_r)=\frac{1}{16(m-1)}\quad(r\ne j),
\]
and define the masses on $J$ analogously with $b_k$ in place of $a_j$.
Each of $I$ and $J$ has total mass $1/4$, independently of the label.

For the remaining points, let $\eta=m^{-4}$ and
$Z=2m-2+4\eta$. Every point in $U\cup V$ outside $\{u_j,v_k\}$ has mass
$1/(2Z)$.
On $B_{jk}$, define
\[
\begin{aligned}
\bigl(P_{0,j,k}(u_j),P_{0,j,k}(v_k)\bigr)
&=
\left(\frac{3\eta}{2Z},\frac{\eta}{2Z}\right),
\qquad
\bigl(P_{1,j,k}(u_j),P_{1,j,k}(v_k)\bigr)
&=
\left(\frac{\eta}{2Z},\frac{3\eta}{2Z}\right).
\end{aligned}
\]
All masses are positive and sum to one. The two distributions with the same
$(j,k)$ differ only at $u_j$ and $v_k$. The label signal is hidden in one very
light pair $B_{jk}$. Conditioning
exactly on $B_{jk}$ cancels its small total mass and reveals the constant
$3/4$-versus-$1/4$ bias. If a query contains any additional ordinary atom,
that atom dilutes the label-dependent discrepancy to $O(\eta)$. We choose
$\eta=m^{-4}$ so that even $O(m^2)$ off-target queries contribute only
$o(1)$ total KL divergence; therefore a non-adaptive tester essentially has
to guess which of the $m^2$ target pairs to query.
Define
\[
\Pcal_b^{(m)}:=\{P_{b,j,k}:j,k\in[m]\}.
\]

\begin{theorem}[Matching quadratic lower bound]
\label{thm:quadratic-gap}
For every $m\ge2$ and $\eps\in(0,1/2)$, write
$n_\eps:=\lceil8\log(3/\eps)\rceil$. There is a constant $c_\eps>0$ depending
only on $\eps$ such that
\[
T_{\mathrm{ad}}^\star(\eps;\Pcal_0^{(m)},\Pcal_1^{(m)})
\le 3n_\eps,
\qquad
c_\eps m^2
\le T_{\mathrm{na}}^\star(\eps;\Pcal_0^{(m)},\Pcal_1^{(m)})
\le (m^2+2)n_\eps.
\tag{4}
\label{eq:hard-family-bounds}
\]
Hence this family has adaptivity gap
$\Theta_{\eps}(m^2)=\Theta_{\eps}(N^2)$. After padding the construction with
at most three common atoms, $\Gap_N(\eps)=\Omega_{\eps}(N^2)$ for every
$N\ge8$.
\end{theorem}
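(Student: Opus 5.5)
The proof splits into three bounds: the adaptive upper bound, the non-adaptive upper bound, and the non-adaptive lower bound, which is the main difficulty.

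\textbf{Adaptive upper bound.} I would use a three-phase adaptive tester. First, query $I$ exactly $n_\eps$ times and let $\hat j$ be the modal outcome. Second, query $J$ exactly $n_\eps$ times and let $\hat k$ be the modal outcome. Third, query $B_{\hat j\hat k}$ exactly $n_\eps$ times and output $0$ if $u_{\hat j}$ appears in the majority of responses.

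In the first phase the true atom has probability $3/4$, and every other atom has probability at most $1/4$. So the mode is wrong only if the count of $a_j$ falls to $n_\eps/2$ or below. By Hoeffding's inequality this has probability at most $e^{-n_\eps/8}\le\eps/3$. The second phase is identical. The third phase is a $3/4$-versus-$1/4$ coin, with the same bound. A union bound gives error at most $\eps$ using $3n_\eps$ queries.

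\textbf{Non-adaptive upper bound.} Query $I$ and $J$ each $n_\eps$ times, and query every $B_{jk}$ exactly $n_\eps$ times. This is $(m^2+2)n_\eps$ queries fixed in advance. Afterwards, read only the block $B_{\hat j\hat k}$ selected by the estimates $\hat j,\hat k$. The same union bound applies.

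\textbf{Non-adaptive lower bound.} I would put a uniform prior on $(j,k)$ and on $b$, and reduce to bounding the Bayes risk. Fix a deterministic query sequence $A_1,\dots,A_T$; randomized sequences follow by averaging. For fixed $(j,k)$, the two transcript laws are product measures, so
\[
\mathrm{KL}(P_{0,j,k}^{\otimes}\Vert P_{1,j,k}^{\otimes})=\sum_t \mathrm{KL}\bigl((P_{0,j,k})_{A_t}\Vert (P_{1,j,k})_{A_t}\bigr).
\]
I would classify each term.

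If $A_t$ contains neither $u_j$ nor $v_k$, the term is zero, because the two distributions differ only on $B_{jk}$.

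If $A_t=B_{jk}$ exactly, the term is the constant $\mathrm{KL}(3/4\Vert1/4)$.

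Otherwise $A_t$ meets $\{u_j,v_k\}$ and also contains some other atom $x$. That atom has mass at least $\min(1/(2Z),\,1/(16(m-1)))=\Omega(1/m)$. Meanwhile $P_b(A_t\cap B_{jk})=O(\eta/m)$, and $P_0(A_t)$, $P_1(A_t)$ differ by $O(\eta/m)$. The chi-square distance between the two conditional laws is then bounded by roughly
\[
\frac{(\text{mass difference})^2}{\text{mass}}\Big/ P(A_t)\;\lesssim\;\frac{(\eta/m)^2}{\eta/m}\cdot m=O(\eta).
\]
A slightly looser bound of $O(\eta m)=O(m^{-3})$ would also suffice. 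So each such term is $O(m^{-3})$.

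Now let $n_{jk}$ be the number of times $B_{jk}$ appears among the queries. Summing over $t$,
\[
\mathrm{KL}_{jk}\le n_{jk}C+T\cdot O(m^{-3}).
\]
Averaging over the prior, and using $\sum_{jk}n_{jk}\le T$, gives
\[
\frac1{m^2}\sum_{jk}\mathrm{KL}_{jk}\le \frac{CT}{m^2}+O\!\left(\frac{T}{m^3}\right).
\]

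To pass from KL to error, I would not rely on averaging TV alone, since the Bayes error may concentrate. Instead I would use worst-case error as follows. The tester must have error at most $\eps$ under each $P_{b,j,k}$. So for each $(j,k)$, the test distinguishes $P_{0,j,k}$ from $P_{1,j,k}$ with total error at most $2\eps$. This forces $\TV\ge1-2\eps$, and Pinsker or Bretagnolle--Huber then forces $\mathrm{KL}_{jk}\ge c'_\eps>0$ for every $(j,k)$. Averaging gives
\[
c'_\eps\le \frac{CT}{m^2}+O\!\left(\frac{T}{m^3}\right),
\]
so $T\ge c_\eps m^2$ once $m$ is large.

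For small $m$, I would use $T\ge1$, which holds because the risk at $T=0$ is $1/2$; adjusting $c_\eps$ covers these cases.

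I expect the main obstacle to be the dilution estimate in the third case. It must hold uniformly over all events, including events that contain both $u_j$ and $v_k$ alongside many other atoms, and events that contain $u_j$ together with the $v_{k'}$ from other blocks.

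\textbf{Padding.} For general $N\ge8$, take $m=\lfloor N/4\rfloor$ and add up to three common atoms with fixed mass. The dilution argument is unchanged, and $m=\Theta(N)$, which gives $\Gap_N(\eps)=\Omega_\eps(N^2)$.
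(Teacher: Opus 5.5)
Your proposal is correct and follows the paper's proof. The testers are the same: the three-phase adaptive tester and the non-adaptive tester that queries every pair. For the lower bound you use the same ingredients: the KL chain rule over a query vector whose law does not depend on the label, the counting constraint $\sum_s N_s\le T$, and Pinsker applied to the worst-case requirement for each $(j,k)$. Your averaging of $\operatorname{KL}_{jk}$ over all $(j,k)$ is equivalent to the paper's choice of a single $s$ with $\mathbb E N_s\le T/m^2$.

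The one real difference is the off-target estimate. You bound it by a $\chi^2$ argument of order $\eta$. The paper instead bounds the conditional total variation by $16\eta$ through a normalization inequality, then converts to KL using the likelihood-ratio range $[1/9,9]$. Both suffice.

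Two small remarks. Your final inequality already gives $T\ge c_\eps m^2$ for every $m\ge2$, so the small-$m$ patch is unnecessary. Also, the singleton events $\{u_j\}$ and $\{v_k\}$ are missing from your case split, but they contribute zero divergence.
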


\paragraph{Step 1: locate the target pair adaptively.}
Query $I$ independently $n_\eps$ times and let $\widehat j$ be the unique index
whose atom appears more than $n_\eps/2$ times, using an arbitrary fallback if no
such index exists. Since $a_j$ appears with probability $3/4$,
Hoeffding's inequality gives
\[
\Pr(\widehat j\ne j)\le e^{-n_\eps/8}.
\]
The same procedure on $J$ produces $\widehat k$ with the same error bound.
After observing these samples, query $B_{\widehat j\widehat k}$ another
$n_\eps$ times and output zero if $u_{\widehat j}$ appears more than
$n_\eps/2$ times.
Conditional on $\widehat j=j$ and $\widehat k=k$, this final decision has
error at most $e^{-n_\eps/8}$. Since this quantity is at most $\eps/3$, a
union bound proves the adaptive bound in equation~\eqref{eq:hard-family-bounds}.

\paragraph{Step 2: only the target pair has a constant separation.}
Fix $s=(j,k)$, write $P_{b,s}:=P_{b,j,k}$, and abbreviate
$P_0=P_{0,s}$, $P_1=P_{1,s}$, and
$B_s=B_{jk}$. On the target pair,
\[
\operatorname{KL}\bigl((P_0)_{B_s}\mathbin\| (P_1)_{B_s}\bigr)
=
\operatorname{KL}\bigl((3/4,1/4)\mathbin\|(1/4,3/4)\bigr)
=
\frac12\log3
=:
d_0.
\]
For $A\ne B_s$, the two conditional laws either coincide or $A$ contains a
common atom whose mass dominates the total discrepancy on $u_j$ and $v_k$.
The normalization calculation in Appendix~\ref{app:lower-bound-proof}, along
with the fact that every conditional likelihood ratio lies between $1/9$ and
$9$, gives
\[
\TV\bigl((P_0)_A,(P_1)_A\bigr)\le16\eta,
\qquad
\operatorname{KL}\bigl((P_0)_A\mathbin\|(P_1)_A\bigr)
\le32(\log9)\eta=:C_0\eta
\quad(A\ne B_s).
\tag{5}
\label{eq:body-off-target-kl}
\]
Thus a query to $B_s$ has a constant KL divergence, whereas every other
query has divergence of order $m^{-4}$.

\paragraph{Step 3: some target pair receives at most $T/m^2$ queries.}
Consider any randomized non-adaptive tester using $T$ queries. For each
$s=(j,k)$, let $N_s$ count how many submitted queries are exactly $B_s$.
For every realized query vector, $\sum_{s\in[m]^2}N_s\le T$. The distribution
of this query vector is the same for every unknown
$P_{b,j,k}$. Averaging over the $m^2$ indices therefore gives one $s$ such
that $\mathbb E N_s\le T/m^2$. This is the point at which nonadaptivity is
used: after observing responses from $I$ and $J$, an adaptive tester may
choose a different target pair for each $(j,k)$.

\paragraph{Step 4: worst-case accuracy requires $T=\Omega(m^2)$.}
For the selected $s$, the KL chain rule and
equation~\eqref{eq:body-off-target-kl} give
\[
\operatorname{KL}\bigl(\mathsf L_{P_{0,s}}(H_T)\mathbin\|
          \mathsf L_{P_{1,s}}(H_T)\bigr)
\le
d_0\mathbb E N_s+C_0\eta T
\le
T\left(\frac{d_0}{m^2}+\frac{C_0}{m^4}\right).
\]
If the tester has worst-case error at most $\eps$, then its error on both
$P_{0,s}$ and $P_{1,s}$ is at most $\eps$. Their transcript laws must
therefore have total variation distance at least $1-2\eps$. Pinsker's
inequality, one of the standard relations between statistical divergences
\citep{ali1966,reid2011,sason2016}, yields
\[
T
\ge
\frac{2(1-2\eps)^2}
{\dfrac{\log3}{2m^2}+\dfrac{32\log9}{m^4}}
\ge
c_\eps m^2.
\tag{6}
\label{eq:quadratic-lower}
\]
For the matching upper bound, query $I$, $J$, and every $B_{rs}$ exactly
$n_\eps$ times, then apply the same majority rules to the stored samples. This
uses $(m^2+2)n_\eps$ queries and completes
equation~\eqref{eq:hard-family-bounds}. Appendix~\ref{app:lower-bound-proof}
verifies the off-target bound, a valid value of $c_\eps$, and padding to every
$N\ge8$.

\section{Scope and Limitations}

The upper and lower bounds identify the exact worst-case quadratic cost of
removing adaptivity in this model. The model assumes a finite full-support
outcome space and exact conditional sampling from every nonempty event. It
therefore applies directly when an evaluation has a finite outcome
representation and can implement the required conditional queries. Query
complexity counts conditional samples, not the cost of specifying or realizing
a conditioning event. Finally, the lower bound is worst-case and does not imply
that typical evaluation tasks attain a quadratic gain from adaptive selection.

\subsection*{AI use statement}

Generative AI tools assisted with developing the theoretical formulation,
formulating and checking mathematical claims, proof development, literature
search, and manuscript drafting and editing. All AI-assisted arguments,
citations, and text were manually reviewed. The authors take responsibility
for the final content of this work, including all text and claims produced with
the aid of generative AI.

\subsection*{Reproducibility statement}

The formal model and all assumptions are stated in Sections~\ref{sec:setup}
and~\ref{sec:learnability}. Complete proofs of the learnability criterion,
the non-adaptive simulation bound, and the quadratic lower bound are provided
in the appendix.

\clearpage
\raggedbottom
\bibliography{references}
\bibliographystyle{iclr2027_conference}

\clearpage
\flushbottom
\appendix

\section{Proof of the Learnability Criterion}
\label{app:learnability-proof}

We first prove the stability relation used in Section~\ref{sec:learnability}.

\begin{lemma}[Pairwise stability of conditional queries]
\label{lem:event-stability}
For any $P,Q\in\Delta_+(\mathcal X)$ and any nonempty
$A\subseteq\mathcal X$,
\[
\TV(P_A,Q_A)
\le
(|A|-1)d_{\mathrm{pair}}(P,Q).
\]
Consequently, equation~\eqref{eq:event-stability} holds.
\end{lemma}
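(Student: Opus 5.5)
The plan is to reduce everything to the two conditional laws on the fixed event $A$. Write $p=P_A$ and $q=Q_A$, viewed as probability vectors on $A$, and set $d=d_{\mathrm{pair}}(P,Q)$. The case $|A|=1$ is trivial: both laws are the point mass, so the total variation is $0$. For $|A|\ge2$, the first step is the nested-conditioning identity: for distinct $i,j\in A$ we have $p_i/(p_i+p_j)=P(i)/(P(i)+P(j))=\theta_{ij}(P)$ (or $1-\theta_{ji}(P)$), and similarly for $q$. Hence
\[
|p_iq_j-q_ip_j|\le d\,(p_i+p_j)(q_i+q_j)
\]
for every pair in $A$, after clearing denominators.

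The second step writes total variation as a transport across a cut. Let $I_+=\{i\in A:p_i\ge q_i\}$ and $I_-=A\setminus I_+$. Using $\sum_{j\in A}q_j=\sum_{j\in A}p_j=1$, we get
\[
\sum_{i\in I_+}\sum_{j\in I_-}(p_iq_j-q_ip_j)=\sum_{i\in I_+}p_i\,q(I_-)-q(I_+)\,p(I_-).
\]
Substituting $q(I_-)=1-q(I_+)$ and $p(I_-)=1-p(I_+)$ reduces this to $p(I_+)-q(I_+)$, which equals $\TV(p,q)$. Then I bound each term by its absolute value, apply the pairwise inequality, and enlarge the sum to all unordered pairs in $A$:
\[
\TV(p,q)\le d\sum_{\{i,j\}\subseteq A}(p_i+p_j)(q_i+q_j).
\]

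The third step, which I expect to be the only real computation, evaluates this pair sum. Expanding gives $\sum_{\{i,j\}}(p_iq_i+p_jq_j)+\sum_{i\ne j}p_iq_j$. The first part equals $(|A|-1)\sum_i p_iq_i$. The second part equals $1-\sum_ip_iq_i$. Together they give $1+(|A|-2)\sum_{i\in A}p_iq_i$. Since $\sum_ip_iq_i\le\sum_ip_i=1$ and $|A|\ge2$, this is at most $|A|-1$, which proves the claimed bound.

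The consequence, equation~\eqref{eq:event-stability}, follows by taking the maximum over $A$ and using $|A|\le N$.
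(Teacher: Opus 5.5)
Your proposal is correct and follows the paper's proof step for step. Nested conditioning gives $|p_iq_j-q_ip_j|\le d(p_i+p_j)(q_i+q_j)$, total variation is written as transport across the cut $I_+\to I_-$, and the pair sum is evaluated as $1+(|A|-2)\sum_{i\in A}p_iq_i\le|A|-1$. You also check explicitly that the cut sum equals $p(I_+)-q(I_+)$, which the paper asserts without verification.
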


\begin{proof}
The claim is immediate when $|A|=1$. Suppose $|A|=m\ge2$ and write
$p=P_A$, $q=Q_A$, and $d=d_{\mathrm{pair}}(P,Q)$. For every distinct
$i,j\in A$, conditioning $p$ and $q$ again on $\{i,j\}$ gives the same
pairwise conditionals as conditioning $P$ and $Q$ directly. Hence
\[
\frac{|p_iq_j-q_ip_j|}{(p_i+p_j)(q_i+q_j)}
\le d.
\tag{A.1}
\label{eq:pair-determinant}
\]
Let $I_+:=\{i\in A:p_i\ge q_i\}$ and $I_-:=A\setminus I_+$. Then
\begin{align*}
\TV(p,q)
&=
\sum_{i\in I_+}\sum_{j\in I_-}(p_iq_j-q_ip_j)\\
&\le
d\sum_{i\in I_+}\sum_{j\in I_-}(p_i+p_j)(q_i+q_j)\\
&\le
d\sum_{\{i,j\}\subseteq A}(p_i+p_j)(q_i+q_j).
\end{align*}
The final sum equals
\[
(m-1)\sum_i p_iq_i+\sum_{i\ne j}p_iq_j
=
1+(m-2)\sum_i p_iq_i
\le
m-1.
\]
This proves the lemma.
\end{proof}

\begin{lemma}[Adaptive transcript stability]
\label{lem:transcript-stability}
Fix any randomized adaptive tester using $T$ queries. If
$\mathsf L_P(H_T)$ and $\mathsf L_Q(H_T)$ denote its transcript laws under
$P$ and $Q$, then
\[
\TV\bigl(\mathsf L_P(H_T),\mathsf L_Q(H_T)\bigr)
\le
T(N-1)d_{\mathrm{pair}}(P,Q).
\]
\end{lemma}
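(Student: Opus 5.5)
We will prove Lemma~\ref{lem:transcript-stability} by a coupling argument built on Lemma~\ref{lem:event-stability}. Write $d:=d_{\mathrm{pair}}(P,Q)$. By Lemma~\ref{lem:event-stability}, every event $A$ satisfies $\TV(P_A,Q_A)\le(|A|-1)d\le(N-1)d$. The goal is to construct a joint process $(H_T,H_T')$ whose first coordinate has law $\mathsf L_P(H_T)$ and whose second has law $\mathsf L_Q(H_T)$, and to show $\Pr(H_T\ne H_T')\le T(N-1)d$. The coupling inequality then yields the bound on total variation.

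\textbf{Construction.} We build the coupling round by round. Let $U_1,\ldots,U_T$ be shared internal randomness, realized so that $A_t=f_t(H_{t-1},U_t)$ produces $A_t\sim\pi_t(\cdot\mid H_{t-1})$. Such $f_t$ exist for finite action sets, for example by inverse-CDF sampling. At round $t$ there are two cases:
\begin{itemize}
\item If $H_{t-1}=H'_{t-1}$, both executions select the same $A_t=f_t(H_{t-1},U_t)$. We then draw $(X_t,X'_t)$ from a maximal coupling of $P_{A_t}$ and $Q_{A_t}$, so $\Pr(X_t\ne X'_t\mid\cdot)=\TV(P_{A_t},Q_{A_t})$.
\item If the histories already differ, we continue the two executions independently, each with its own correct conditional law.
\end{itemize}
In either case, conditional on its own past, each marginal process draws $A_t$ from $\pi_t$ and $X_t$ from the correct conditional law ($P_{A_t}$ or $Q_{A_t}$). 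By induction on $t$, the marginals are exactly $\mathsf L_P(H_T)$ and $\mathsf L_Q(H_T)$. The one point needing care is that the shared variable $U_t$ does not distort either marginal. This holds because $U_t$ is independent of the past, and the observation coupling is applied only after $A_t$ has been fixed.

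\textbf{Bound.} Let $D_t$ be the event that the histories agree through round $t-1$ and first differ at round $t$. On agreement, the event $D_t$ has conditional probability at most $\TV(P_{A_t},Q_{A_t})\le(N-1)d$. Taking expectations gives $\Pr(D_t)\le(N-1)d$. Since $\{H_T\ne H'_T\}=\bigcup_t D_t$, a union bound gives
\[
\Pr(H_T\ne H'_T)\le T(N-1)d.
\]
If the tester's final decision is randomized, its randomness can be shared as well, so the same bound carries over to the laws of its output.

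\textbf{Main obstacle.} The main difficulty is the formal verification that the coupled process has the correct marginals. This requires the measurable maximal coupling, which is trivial here because $\mathcal X$ is finite, together with the inductive conditional-law check.
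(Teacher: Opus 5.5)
Your proposal is correct and follows the paper's proof: couple the two executions with shared policy randomness while their histories agree, couple each round's observations maximally, bound the probability of first disagreement at each round by $(N-1)d_{\mathrm{pair}}(P,Q)$ via Lemma~\ref{lem:event-stability}, and finish with a union bound over the $T$ rounds and the coupling characterization of total variation. Your explicit marginal check, that the next-step law given the joint past depends only on each coordinate's own history, supplies a detail the paper leaves implicit.
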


\begin{proof}
Couple the two executions one round at a time. As long as their histories
agree, use the same private randomness to select the same event $A_t$, then
couple the two observations optimally. Lemma~\ref{lem:event-stability} bounds
the conditional probability that the observations first differ at round $t$
by $(N-1)d_{\mathrm{pair}}(P,Q)$. A union bound over the $T$ rounds bounds the
probability that the transcripts differ. The coupling characterization of
total variation gives the result.
\end{proof}

\begin{proof}[Proof of Theorem~\ref{thm:learnability}]
First suppose $\Delta_{\mathrm{pair}}(\Pcal_0,\Pcal_1)=0$. The case $T=0$
follows directly from the absence of observations. For $T\ge1$ and every
$\eta>0$, choose $P\in\Pcal_0$ and $Q\in\Pcal_1$ such that
\[
d_{\mathrm{pair}}(P,Q)
<
\frac{\eta}{T(N-1)}.
\]
Lemma~\ref{lem:transcript-stability} makes the two transcript laws closer than
$\eta$ in total variation. For any decision rule, the sum of its error
probabilities under $P$ and $Q$ is at least $1-\eta$. Its worst-case error is
therefore at least $(1-\eta)/2$. Letting $\eta$ tend to zero gives an adaptive
minimax risk of at least $1/2$. A fair random guess achieves $1/2$, so the
adaptive risk is exactly $1/2$. The same conclusion follows for non-adaptive
testing from
$R_T^{\mathrm{ad}}\le R_T^{\mathrm{na}}\le1/2$.

Now suppose $\Delta:=\Delta_{\mathrm{pair}}(\Pcal_0,\Pcal_1)>0$ and fix
$\delta\in(0,1/2)$. Query every pair
\[
n
:=
\left\lceil
\frac{8}{\Delta^2}\log\frac{2M}{\delta}
\right\rceil
\]
times. Let $\widehat\theta_{ij}$ be the empirical frequency of outcome $i$
from queries to $\{i,j\}$, and let
$\widehat\Theta=(\widehat\theta_{ij})_{i<j}$. Hoeffding's inequality and a
union bound give
\[
\Pr_P\left(
\|\widehat\Theta-\Theta(P)\|_\infty\ge\frac{\Delta}{4}
\right)
\le
2M\exp\left(-\frac{n\Delta^2}{8}\right)
\le
\delta.
\]
For $b\in\{0,1\}$, define
\[
D_b
:=
\inf_{Q\in\Pcal_b}
\|\widehat\Theta-\Theta(Q)\|_\infty,
\]
and output a label minimizing $D_b$. If $P\in\Pcal_b$ and
$\|\widehat\Theta-\Theta(P)\|_\infty<\Delta/4$, then $D_b<\Delta/4$ while
$D_{1-b}\ge3\Delta/4$. The decision is therefore correct with probability at
least $1-\delta$, uniformly over both classes. The test is non-adaptive and
uses $Mn$ queries. This proves the finite-sample bound and all three
equivalences.
\end{proof}

\section{Proof of the Universal Transcript Simulation}
\label{app:upper-bound-proofs}

Fix $P\in\Delta_+(\mathcal X)$. Let
$\omega_s=(E_s,Y_s)$, $s\ge1$, be an i.i.d.\ source with
\[
E_s\sim\operatorname{Unif}\left(\binom{\mathcal X}{2}\right),
\qquad
Y_s\mid E_s=e\sim P_e.
\]
For a nonempty event $A$, let $F_s^A:=\Phi_{E_s,Y_s}^A$ be the random map
defined in Section~\ref{sec:upper-bound}.

\begin{lemma}[Stationarity of the pairwise maps]
\label{lem:map-stationarity}
If $X\sim P_A$ is independent of $F_s^A$, then
$F_s^A(X)\sim P_A$.
\end{lemma}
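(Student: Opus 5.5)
The plan is to condition on the random pair $E_s=e$ and compute the law of $F_s^A(X)$ by hand. We then average over $e$, using the independence of $X$ from $(E_s,Y_s)$. Because $F_s^A$ depends on $\omega_s$ only, it suffices to show that for each fixed pair $e$ the random map $\Phi^A_{e,Y}$, with $Y\sim P_e$ independent of $X\sim P_A$, sends $P_A$ to $P_A$. The unconditional statement then follows by mixing over $E_s\sim\operatorname{Unif}\bigl(\binom{\mathcal X}{2}\bigr)$, since a mixture of maps that each preserve $P_A$ also preserves $P_A$.

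\emph{Case $e\not\subseteq A$.} By definition $\Phi^A_{e,y}$ is the identity on $A$ for every $y$, so $F_s^A(X)=X\sim P_A$.

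\emph{Case $e\subseteq A$.} Fix $z\in A$ and split according to whether $z\in e$.
\begin{itemize}
\item If $z\in A\setminus e$, then $\Phi^A_{e,Y}(x)=z$ holds exactly when $x=z$, since points of $e$ are sent into $e$ and points outside $e$ are fixed. Hence the mass at $z$ is $P_A(z)$.
\item If $z\in e$, the event $\{\Phi^A_{e,Y}(X)=z\}$ equals $\{X\in e\}\cap\{Y=z\}$. By independence its probability is $P_A(e)\,P_e(z)=\frac{P(e)}{P(A)}\cdot\frac{P(z)}{P(e)}=P_A(z)$.
\end{itemize}
Here full support ensures $P(e)>0$ and $P(A)>0$, so all conditionals are well defined. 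The identity used is the nested-conditioning relation $P_A(\cdot\mid e)=P_e$ for $e\subseteq A$.

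Combining the two cases gives $\Pr(F_s^A(X)=z\mid E_s=e)=P_A(z)$ for every $e$ and $z\in A$. Averaging over $e$ yields $F_s^A(X)\sim P_A$.

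I do not expect any step to be a real obstacle. The only point needing care is the joint independence structure: $Y_s$ is conditionally distributed as $P_{E_s}$ given $E_s$, and $X$ must be independent of the pair $(E_s,Y_s)$ jointly. That joint independence is exactly what justifies factoring $P_A(e)P_e(z)$ after conditioning on $E_s=e$.
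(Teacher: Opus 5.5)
Your proof is correct and is essentially the paper's own argument: condition on $E_s=e$, use that the map is the identity when $e\not\subseteq A$, compute the mass at $z\in e$ as $P_A(e)P_e(z)=P_A(z)$ with the masses on $A\setminus e$ unchanged, and average over $E_s$. Your remark that $X$ must be independent of $(E_s,Y_s)$ jointly is a harmless clarification the paper leaves implicit; it holds in the construction, and the map already determines $(E_s,Y_s)$ whenever $E_s\subseteq A$.
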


\begin{proof}
Condition on $E_s=e$. If $e\not\subseteq A$, the map is the identity. If
$e\subseteq A$, it fixes every state outside $e$ and replaces a state in $e$
by an independent draw from $P_e$. For $z\in e$, the resulting mass at $z$
is
\[
P_A(e)P_e(z)
=
\frac{P(e)}{P(A)}\frac{P(z)}{P(e)}
=
P_A(z).
\]
The mass of each $z\in A\setminus e$ is unchanged. Thus every conditional
map kernel preserves $P_A$, and averaging over $E_s$ proves the claim.
\end{proof}

Interpret $F_1^A$ as the map from time $-1$ to time $0$, $F_2^A$ as the map
from time $-2$ to time $-1$, and so on. Define
\[
C_n^A
:=
F_1^A\circ F_2^A\circ\cdots\circ F_n^A,
\qquad
\tau_A
:=
\inf\{n\ge0:C_n^A\text{ is constant on }A\}.
\]
When $\tau_A<\infty$, write $W_A$ for the common value of
$C_{\tau_A}^A$.

\begin{lemma}[Exactness and coalescence cost]
\label{lem:cftp-cost}
For every nonempty $A\subseteq\mathcal X$, $\tau_A<\infty$ almost surely and
$W_A\sim P_A$. If $a:=|A|\ge2$, then
\[
\tau_A
\overset{d}{=}
\sum_{k=2}^{a}G_k,
\qquad
G_k\sim\operatorname{Geom}\left(\frac{\binom k2}{M}\right)
\text{ independently}.
\tag{B.1}
\label{eq:appendix-coalescence-law}
\]
Consequently,
\[
\mathbb E\tau_A
=
2M\left(1-\frac1a\right)
<2M,
\tag{B.2}
\label{eq:mean-coalescence}
\]
and
\[
\mathbb E\exp\left(\frac{\tau_A}{8M}\right)
\le e.
\tag{B.3}
\label{eq:coalescence-mgf}
\]
For $a=1$, the same conclusions hold with $\tau_A=0$.
\end{lemma}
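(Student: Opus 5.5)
The proof has three parts: exactness of $W_A$ via coupling from the past, the geometric decomposition (B.1) of $\tau_A$ via a forward-tracking argument, and the two moment bounds (B.2)--(B.3) from that decomposition. The case $a=1$ is trivial: $C_0^A=\mathrm{id}_A$ is already constant, so $\tau_A=0$ and $W_A$ is the unique point of $A$, which has law $P_A$. From now on I take $a\ge2$.

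\emph{Law of $\tau_A$.} For fixed $n$, the maps $F_1^A,\dots,F_n^A$ are i.i.d. Hence $C_n^A=F_1^A\circ\cdots\circ F_n^A$ has the same law as the forward composition $F_n^A\circ\cdots\circ F_1^A$. So $\Pr(\tau_A\le n)=\Pr(\sigma_A\le n)$, where $\sigma_A$ is the first time the forward composition is constant. For the forward process I track the image set $S_t:=F_t^A\circ\cdots\circ F_1^A(A)$, which starts at $S_0=A$. If $E_{t+1}\subseteq S_t$, the map merges the two endpoints of $E_{t+1}$ into one point of $S_t$, so $|S_{t+1}|=|S_t|-1$. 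Otherwise the map sends $S_t$ into itself and collapses nothing: a point of $S_t$ may move into $A\setminus S_t$, but distinct points of $S_t$ stay distinct, and it is the cardinality that matters. Note that the image set need not stay inside $S_t$ when only one endpoint lies in it, but the cardinality is still unchanged. Thus $|S_t|$ is a Markov chain that moves from $k$ to $k-1$ with probability $\binom k2/M$ at each step, independently across steps. The holding times are therefore independent geometric variables, and $\sigma_A$ has the law (B.1). Since $\sigma_A<\infty$ almost surely, so is $\tau_A$.

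\emph{Exactness of $W_A$.} Take $Z_n\sim P_A$ independent of the maps. By Lemma~\ref{lem:map-stationarity} applied $n$ times (using independence), $C_n^A(Z_n)\sim P_A$. On $\{\tau_A\le n\}$, $C_n^A=C_{\tau_A}^A\circ(\text{later maps})$ is constant and equal to $W_A$. Hence $\TV(\mathsf L(W_A),P_A)\le\Pr(\tau_A>n)\to0$, which gives $W_A\sim P_A$.

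\emph{Moments.} For (B.2), compute directly: $\mathbb E\tau_A=\sum_{k=2}^a M/\binom k2=2M\sum_{k=2}^a\bigl(\tfrac1{k-1}-\tfrac1k\bigr)=2M(1-1/a)$.

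The main obstacle is (B.3). With $\lambda=1/(8M)$ and $p_k=\binom k2/M$, the geometric moment generating function gives
\[
\mathbb E e^{\lambda G_k}=\frac{p_ke^\lambda}{1-(1-p_k)e^\lambda}.
\]
Since $e^\lambda-1\le 2\lambda=\tfrac1{4M}$ for $\lambda\le1/8$, the denominator is at least $p_k-(e^\lambda-1)\ge p_k-\tfrac{1}{4M}$. Using $p_k\ge1/M$, this gives
\[
\mathbb E e^{\lambda G_k}\le \frac{e^\lambda}{1-\tfrac{1}{4Mp_k}}.
\]
Taking logarithms and using $-\log(1-x)\le x/(1-x)\le \tfrac43x$ for $x\le1/4$, I get
\[
\log\mathbb E e^{\lambda\tau_A}\le (a-1)\lambda+\tfrac43\sum_k\tfrac1{4Mp_k}.
\]
The sum equals $\tfrac13\sum_k 1/\binom k2<\tfrac23$. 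The first term is $(a-1)/(8M)\le \tfrac{N-1}{4N(N-1)}\le\tfrac18$. The total is therefore below $1$, which yields $\mathbb E\exp(\tau_A/(8M))\le e$. If the constants turn out too tight at any step, I would bound $\mathbb E e^{\lambda G_k}\le 1+\lambda/p_k\cdot c$ termwise instead and sum the logarithms.
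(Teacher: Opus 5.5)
Your proof is correct and follows the paper's argument essentially step for step: reversing the i.i.d.\ maps, tracking the forward image set with independent geometric holding times, the standard coupling-from-the-past exactness bound, telescoping for the mean, and the geometric moment generating function for (B.3). The only differences are minor. In (B.3) you bound the $(a-1)\lambda\le 1/(4N)$ term separately instead of absorbing it into $4\lambda/p_k$ as the paper does. Also, your phrase that a one-endpoint map ``sends $S_t$ into itself'' is inaccurate and should simply be replaced by the injectivity-on-$S_t$ statement you give immediately afterward.
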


\begin{proof}
For each fixed $n$, reversal invariance of the i.i.d.\ maps gives
\[
F_1^A\circ\cdots\circ F_n^A
\overset{d}{=}
F_n^A\circ\cdots\circ F_1^A.
\]
Consider the forward image process $S_0=A$ and
$S_r=F_r^A(S_{r-1})$. If $|S_{r-1}|=k$, its size decreases by one exactly
when both endpoints of $E_r$ lie in $S_{r-1}$. This event has probability
\[
p_k:=\frac{\binom k2}{M}.
\]
If the pair contains at most one state in the current image, the map either
fixes that image or replaces one state by another and its size stays $k$.
Thus the forward time to reach a singleton is the sum of independent
geometric waiting times in equation~\eqref{eq:appendix-coalescence-law}.
The reversal identity shows that this hitting time has the same distribution
as $\tau_A$. It is finite almost surely.

Lemma~\ref{lem:map-stationarity} also proves exactness by the standard
coupling-from-the-past argument \citep{propp1996}. For completeness, let
$Z_n\sim P_A$ be independent of the first $n$ maps. Applying the maps from
the oldest to the most recent gives
$C_n^A(Z_n)\sim P_A$. On $\{\tau_A\le n\}$, this value equals $W_A$ for
every $Z_n$. Hence
\[
\TV(\mathsf L(W_A),P_A)
\le
\Pr(\tau_A>n),
\]
and the right-hand side tends to zero.

The mean follows from
\[
\sum_{k=2}^{a}\frac{1}{\binom k2}
=
2\left(1-\frac1a\right).
\]
It remains to prove the exponential moment. Set $s=1/(8M)$. If
$G\sim\operatorname{Geom}(p)$ on $\{1,2,\ldots\}$, then
\[
\mathbb E e^{sG}
=
\frac{pe^s}{1-(1-p)e^s}
=
\frac{e^s}{1-u},
\qquad
u:=\frac{(1-p)(e^s-1)}{p}.
\]
For every $p=p_k$, we have $s\le p/8$. Since $s\le1/8$ and
$e^s-1\le2s$, it follows that $u\le1/4$. Therefore
\[
\log\mathbb E e^{sG}
=
s-\log(1-u)
\le
s+\frac43u
\le
\frac{4s}{p}.
\]
Using independence in equation~\eqref{eq:appendix-coalescence-law},
\begin{align*}
\log\mathbb E e^{s\tau_A}
&\le
4s\sum_{k=2}^{a}\frac1{p_k}\\
&=
\frac12\sum_{k=2}^{a}\frac1{\binom k2}\\
&=
1-\frac1a
\le1,
\end{align*}
which proves equation~\eqref{eq:coalescence-mgf}.
\end{proof}

\begin{proof}[Proof of Theorem~\ref{thm:upper-bound}]
First allow an infinite i.i.d.\ source $(\omega_s)_{s\ge1}$. Draw the private
randomness of $\pi$ independently. At round $t$, the policy chooses $A_t$
from the simulated history. Starting at the next unused source element, run
the construction above until its maps coalesce on $A_t$, and set
$\widetilde X_t=W_{A_t}$.

The number of elements read is a stopping time for the fresh source segment.
The strong Markov property of an i.i.d.\ sequence implies that the unused tail
is again i.i.d.\ and independent of the maps already consumed. This is the
read-once coupling-from-the-past construction of \citet{wilson2000}. By
Lemma~\ref{lem:cftp-cost}, conditionally on the simulated history,
$\widetilde X_t\sim P_{A_t}$. Induction over $t$ therefore shows that the
infinite-stream simulated transcript has exactly the same law as $H_T^\pi$.

Let $\tau_t$ be the number of source elements consumed at round $t$ and let
$S_T=\sum_{t=1}^T\tau_t$. Equation~\eqref{eq:coalescence-mgf} holds
conditionally on every past simulated history, uniformly over the selected
event. Iterating conditional expectations gives
\[
\mathbb E\exp\left(\frac{S_T}{8M}\right)
\le
e^T.
\]
Markov's inequality now yields
\begin{align*}
\Pr\left[
S_T>8M\left(T+\log\frac1\rho\right)
\right]
&\le
\exp\left(-T-\log\frac1\rho\right)e^T\\
&=
\rho.
\tag{B.4}
\label{eq:stream-overflow}
\end{align*}

The finite simulation chooses the $L$ pairs in
equation~\eqref{eq:simulation-budget} before observing any response and runs the
same construction until either all $T$ rounds are simulated or the source
is exhausted. In the latter case a fixed completion rule returns a valid
$T$-round transcript.
Couple it to the infinite construction using the same first $L$ source
elements. Their transcripts agree unless $S_T>L$, an event of probability at
most $\rho$ by equation~\eqref{eq:stream-overflow}. The coupling
characterization of total variation proves the theorem. Applying the same
decision rule changes its error by at most $\rho$.
\end{proof}

\begin{proof}[Proof of Corollary~\ref{cor:quadratic-upper}]
Suppose a $T$-query adaptive tester has worst-case error at most
$\eps<1/2$. For any $\delta\in(0,1/2)$, let $K_{\eps,\delta}$ be the smallest
odd integer at least
\[
\frac{2}{(1-2\eps)^2}\log\frac{2}{\delta}.
\]
Run $K=K_{\eps,\delta}$ independent copies and take a majority vote.
Hoeffding's inequality gives error at most
\[
\exp\left(-\frac{K(1-2\eps)^2}{2}\right)
\le
\frac{\delta}{2}.
\]
This amplified tester uses $KT$ adaptive queries. Apply
Theorem~\ref{thm:upper-bound} with $\rho=\delta/2$. The simulated tester has
error at most $\delta$ and uses at most
\[
\left\lceil
8M\left(K_{\eps,\delta}T+\log\frac{2}{\delta}\right)
\right\rceil
\tag{B.5}
\label{eq:testing-upper-bound}
\]
pair queries.

For the fixed-error gap, set $T=T_{\mathrm{ad}}^\star(\eps;\Pcal_0,\Pcal_1)$
and $\delta=\eps$. If the minimax infimum at horizon $T$ is not attained,
choose a $T$-query adaptive tester with error below any fixed
$\bar\eps\in(\eps,1/2)$ and apply the same argument with
$K_{\bar\eps,\eps}$. Since the definition of $\Gap_N(\eps)$ restricts to
$T\ge1$, division by $T$ gives a constant depending only on $\eps$ times
$M=\binom N2$.
\end{proof}

\section{Proof of the Quadratic Lower Bound}
\label{app:lower-bound-proof}

\begin{proof}[Proof of Theorem~\ref{thm:quadratic-gap}]
Fix $m\ge2$ and $\eps\in(0,1/2)$, and write
$n_\eps=\lceil8\log(3/\eps)\rceil$. The masses in the construction are
positive. Moreover,
\[
P_{b,j,k}(I)=P_{b,j,k}(J)=\frac14,
\qquad
P_{b,j,k}(U\cup V)
=
\frac{2m-2+4\eta}{2Z}
=
\frac12,
\]
so each $P_{b,j,k}$ is a full-support probability distribution.

Conditional on $I$, the distinguished atom $a_j$ has probability $3/4$.
Let $\widehat j$ be the unique index whose atom appears more than
$n_\eps/2$ times among $n_\eps$ samples from $P_{b,j,k}(\cdot\mid I)$, with
a fixed fallback when no such index exists. Hoeffding's inequality gives
\[
\Pr(\widehat j\ne j)
\le
\Pr\bigl(\operatorname{Bin}(n_\eps,3/4)\le n_\eps/2\bigr)
\le
e^{-n_\eps/8}.
\]
The same procedure using $J$ gives
$\Pr(\widehat k\ne k)\le e^{-n_\eps/8}$. Conditional on
$\widehat j=j$ and $\widehat k=k$, thresholding the frequency of $u_j$ in
$n_\eps$ samples from $B_{jk}$ at $1/2$ has error at most
$e^{-n_\eps/8}$ under either label. Since
$3e^{-n_\eps/8}\le\eps$, the adaptive tester uses $3n_\eps$ queries and has
worst-case error at most $\eps$. A non-adaptive tester obtains the same error
by querying $I$ and $J$ $n_\eps$ times each and every pair
$B_{rs}=\{u_r,v_s\}$ $n_\eps$ times before observing any response. It then
uses the samples from $B_{\widehat j\widehat k}$. This proves both upper bounds
in equation~\eqref{eq:hard-family-bounds}.

It remains to prove the non-adaptive lower bound. Fix
$s=(j,k)$, write $P_{b,s}:=P_{b,j,k}$, and abbreviate
$P_0=P_{0,s}$, $P_1=P_{1,s}$, and
$B_s=B_{jk}$. The target pair satisfies
\[
d_0
:=
\operatorname{KL}\bigl((P_0)_{B_s}\mathbin\|(P_1)_{B_s}\bigr)
=
\operatorname{KL}\bigl((3/4,1/4)\mathbin\|(1/4,3/4)\bigr)
=
\frac12\log3.
\]
We next bound every other query. For a nonempty event $A$, let
$\mu_A=(P_0)_A$ and $\nu_A=(P_1)_A$. The unconditional likelihood ratio
$P_0(x)/P_1(x)$ belongs to $\{1/3,1,3\}$, which implies
\[
\frac19
\le
\frac{\mu_A(x)}{\nu_A(x)}
\le
9
\qquad(x\in A).
\]
For positive finite measures $\alpha$ and $\beta$ on $A$, set
$a=\alpha(A)$ and $b=\beta(A)$. Then
\begin{align*}
2\TV(\alpha_A,\beta_A)
&=
\sum_{x\in A}\left|\frac{\alpha(x)}a-\frac{\beta(x)}b\right|\\
&\le
\frac{\sum_{x\in A}|\alpha(x)-\beta(x)|}{a}
+
\frac{|a-b|}{a}.
\end{align*}
Since $|a-b|\le\sum_{x\in A}|\alpha(x)-\beta(x)|$, and the same inequality
holds after exchanging $a$ and $b$,
\[
\TV(\alpha_A,\beta_A)
\le
\frac{\sum_{x\in A}|\alpha(x)-\beta(x)|}
{\min\{\alpha(A),\beta(A)\}}.
\]

The distributions $P_0$ and $P_1$ differ only at $u_j$ and $v_k$, with
unconditional $\ell_1$ discrepancy $2\eta/Z$. If $A$ contains neither active
atom, or if it is a singleton active atom, then $\mu_A=\nu_A$. In every other
case with $A\ne B_s$, the event $A$ contains a common atom. Every common atom
has mass at least $1/[16(m-1)]$ under both labels. For inactive atoms in
$U\cup V$, this follows from $Z\le8(m-1)$. Since $Z\ge2(m-1)$, the preceding
normalization inequality gives
\[
\TV(\mu_A,\nu_A)
\le
\frac{2\eta/Z}{1/[16(m-1)]}
\le
16\eta
\qquad(A\ne B_s).
\]
If two distributions $\mu$ and $\nu$ have likelihood ratios in $[1/R,R]$,
then
\begin{align*}
\operatorname{KL}(\mu\|\nu)+\operatorname{KL}(\nu\|\mu)
&=
\sum_x(\mu(x)-\nu(x))\log\frac{\mu(x)}{\nu(x)}\\
&\le
2(\log R)\TV(\mu,\nu).
\end{align*}
Consequently,
\[
\operatorname{KL}(\mu_A\|\nu_A)
\le
32(\log9)\eta
=:
C_0\eta
\qquad(A\ne B_s).
\]

Now fix an arbitrary randomized non-adaptive tester using $T$ queries. Its
query vector $A_{1:T}$ has the same law under every distribution in the two
classes. For $s\in[m]^2$, let
$N_s=\sum_{t=1}^T\mathbf1\{A_t=B_s\}$. Since
$\sum_{s\in[m]^2}N_s\le T$ for every realized query vector, there is an
$s\in[m]^2$ for which $\mathbb E N_s\le T/m^2$. Conditional on the query
vector, the responses are independent. The KL chain rule therefore gives
\begin{align*}
&\operatorname{KL}\bigl(
\mathsf L_{P_{0,s}}(H_T)\mathbin\|\mathsf L_{P_{1,s}}(H_T)
\bigr)\\
&\qquad=
\mathbb E\left[
\sum_{t=1}^T
\operatorname{KL}\bigl((P_{0,s})_{A_t}\mathbin\|(P_{1,s})_{A_t}\bigr)
\right]\\
&\qquad\le
d_0\mathbb E N_s+C_0\eta T
\le
T\left(\frac{d_0}{m^2}+\frac{C_0}{m^4}\right).
\end{align*}
If the tester has worst-case error at most $\eps$, data processing through its
decision rule implies that these transcript laws have total variation at least
$1-2\eps$. Pinsker's inequality yields
\[
2(1-2\eps)^2
\le
T\left(\frac{d_0}{m^2}+\frac{C_0}{m^4}\right).
\]
For $m\ge2$,
\[
\frac{d_0}{m^2}+\frac{C_0}{m^4}
\le
\frac{\frac12\log3+8\log9}{m^2}.
\]
Thus equation~\eqref{eq:quadratic-lower} holds with
\[
c_\eps
:=
\frac{2(1-2\eps)^2}{\frac12\log3+8\log9}.
\]
Since a zero-query test has worst-case error at least $1/2$, the adaptive query
complexity of this family is at least one. Combining this fact with the three
bounds in equation~\eqref{eq:hard-family-bounds} shows that its adaptivity gap
is $\Theta_\eps(m^2)=\Theta_\eps(N^2)$.

Finally, let $N\ge8$, set $m=\lfloor N/4\rfloor$, and write $r=N-4m$. The
case $r=0$ is already covered. If $r>0$, add atoms $c_1,\ldots,c_r$ and define
\[
\widetilde P_{b,j,k}(x)
=
\frac78P_{b,j,k}(x)
\quad(x\in I\cup J\cup U\cup V),
\qquad
\widetilde P_{b,j,k}(c_\ell)
=
\frac{1}{8r}.
\]
Let
$\widetilde\Pcal_b^{(N)}=\{\widetilde P_{b,j,k}:j,k\in[m]\}$. The masses sum
to $7/8+1/8=1$, so these are full-support distributions on an outcome space
of size $N$. Conditional laws on events
contained in the original four blocks are unchanged, so the adaptive upper
bound and the target-pair divergence remain unchanged.

Every off-target event that distinguishes the labels contains a common atom.
Each such atom has mass at least $7/(128m)$. Indeed, this follows from the
original masses for atoms in $I\cup J$, from $Z\le8m$ for inactive atoms in
$U\cup V$, and from $r\le3$ and $1/(8r)\ge7/(128m)$ for the new atoms. The
unconditional $\ell_1$ discrepancy
is
\[
\frac78\frac{2\eta}{Z}
=
\frac{7\eta}{4Z}
\le
\frac{7\eta}{4m},
\]
where $Z\ge m$. Hence every off-target conditional total variation is at most
$32\eta$. The conditional likelihood ratios remain in $[1/9,9]$, so every
off-target KL divergence is at most $64(\log9)\eta$. For any randomized
non-adaptive tester with $T$ queries, let $\widetilde N_s$ count the queries
equal to $B_s$. Since $\sum_s\widetilde N_s\le T$, there is an $s\in[m]^2$
such that $\mathbb E\widetilde N_s\le T/m^2$. The KL chain rule then gives
\[
\operatorname{KL}\bigl(
\mathsf L_{\widetilde P_{0,s}}(H_T)
\mathbin\|
\mathsf L_{\widetilde P_{1,s}}(H_T)
\bigr)
\le
T\left(
\frac{d_0}{m^2}
+
\frac{64\log9}{m^4}
\right).
\]
If the tester has worst-case error at most $\eps$, Pinsker's inequality implies
\[
2(1-2\eps)^2
\le
T\left(
\frac{d_0}{m^2}
+
\frac{64\log9}{m^4}
\right),
\]
and therefore $T=\Omega_\eps(m^2)$. The adaptive tester above still uses at
most $3n_\eps$ queries. Since $m=\lfloor N/4\rfloor\ge N/8$, the padded family
has adaptivity gap $\Omega_\eps(N^2)$. This proves the theorem.
\end{proof}

\end{document}